\documentclass[letterpaper, 10 pt, conference]{ieeeconf}

\usepackage{amsmath}
\usepackage{amssymb}
\usepackage{mathtools}

\usepackage{balance}

\usepackage{tabularx}
\usepackage{graphicx}
\usepackage{bm}
\usepackage{color}
\usepackage{float}
\usepackage{units}
\usepackage{url}
\usepackage{hyperref}
\hypersetup{hidelinks}
\usepackage{amsmath}
\usepackage{makecell}
\usepackage{cite}
\usepackage{contour}
\usepackage[table,dvipsnames]{xcolor}
\usepackage{amsthm}
\usepackage{algorithm, algorithmicx, algpseudocode}
\usepackage{booktabs, multirow, multicol}
\usepackage{adjustbox}
\usepackage{empheq}

\definecolor{lightgrayrow}{RGB}{240,240,240}
\definecolor{lightpinkrow}{RGB}{255,230,235}

\newcommand{\newsec}[1]{\vspace{0.1cm} \noindent \textbf{#1} }

\makeatletter 
\def\endfigure{\end@float} 
\def\endtable{\end@float}
\makeatother

\usepackage{subcaption}
\usepackage[]{graphicx}
\usepackage{caption}

\newcommand{\csf}{CSF}

\newtheorem{theorem}{Theorem}

\newtheorem{lemma}{Lemma}
\newtheorem{definition}{Definition}

\newtheorem{proposition}{Proposition}
\usepackage{lipsum}
\usepackage{subcaption}

\IEEEoverridecommandlockouts

\begin{document}
\title{\csf{}: Contextual Safety Filtering for Motion Generators}
\author{Lizhi Yang$^1$, Yiling Hou$^1$, Yao Tang$^1$, Junheng Li$^1$, Daniel Weng$^2$, Blake Werner$^1$ and Aaron D. Ames$^1$
\thanks{$^1$California Institute of Technology, CA 91125, USA}\thanks{$^2$New York University, NY 10012, USA}
 \thanks{This work is supported by The Dow Chemical Company project \#227027AW and in part by Technology Innovation Institute.}
}
\maketitle
\thispagestyle{empty}
\pagestyle{empty}
\begin{abstract}
Text-conditioned motion generators produce trackable whole-body motion, but they have no notion of scene-dependent safety: the same action may target an object or a person.
Existing safeguards either inspect the prompt, require labeled motion data, or enforce geometric constraints; therefore, they do not directly account for how scene context changes a motion's meaning.
We introduce contextual safety filtering (\csf{}), a training-free filter that grounds natural-language safety rules in safe and unsafe reference trajectories produced by the generator.
For each active rule, safe and unsafe reference trajectories define an affine safety value that a safe reference tracking CBF-QP enforces.
Across four pretrained generators with different architectures, \csf{} activates the intended rules in all explicit and scene-triggered unsafe cases and reduces the danger-event rate by up to $90\%$, while preserving $88-100\%$ of benign motions.
We demonstrate the complete system on a real-world Unitree G1, where it successfully prevents unsafe motions in a variety of scenarios, including interactions with humans and objects.
\end{abstract}

\section{Introduction}\label{sec:introduction}

Pretrained text-conditioned motion generators can map natural-language descriptions to whole-body motions that are smooth, physically feasible, and increasingly trackable by humanoid controllers~\cite{rempe2026kimodo,jia2026echo,li2026motionhiflow,zhao2026ardy}.
However, physical feasibility alone does not determine whether a robot should execute a generated motion.
Kicking a ball and kicking a nearby person can require nearly the same body motion, even though the target in the scene changes whether the action is permitted.
A whole-body controller will track either reference, so the safety decision must account for the scene as well as the motion.

Existing safety mechanisms address only parts of this distinction.
Language guardrails can be bypassed by adversarial or misleading prompts~\cite{robey2025jailbreaking}, and a prompt-only screen cannot verify whether the generated motion is directed toward a benign object or a person.
Both a motion classifier and \csf{} require a safety specification. 
However, a separately trained classifier learns it from labeled clips and must be retrained or re-validated when the generator changes. 
\csf{} specifies unsafe behavior directly through natural-language rules.
Inference-time guidance imposes spatial or physical objectives~\cite{karunratanakul2023gmd,yuan2023physdiff} that specify where or how the body should move, rather than whether the requested act is permitted.
Classical robot safety filters enforce geometric sets such as collision-free regions and joint limits~\cite{ames2019control}, whereas the relevant harm here is semantic rather than geometric.

\begin{figure}[t]
  \centering
  \includegraphics[width=\columnwidth]{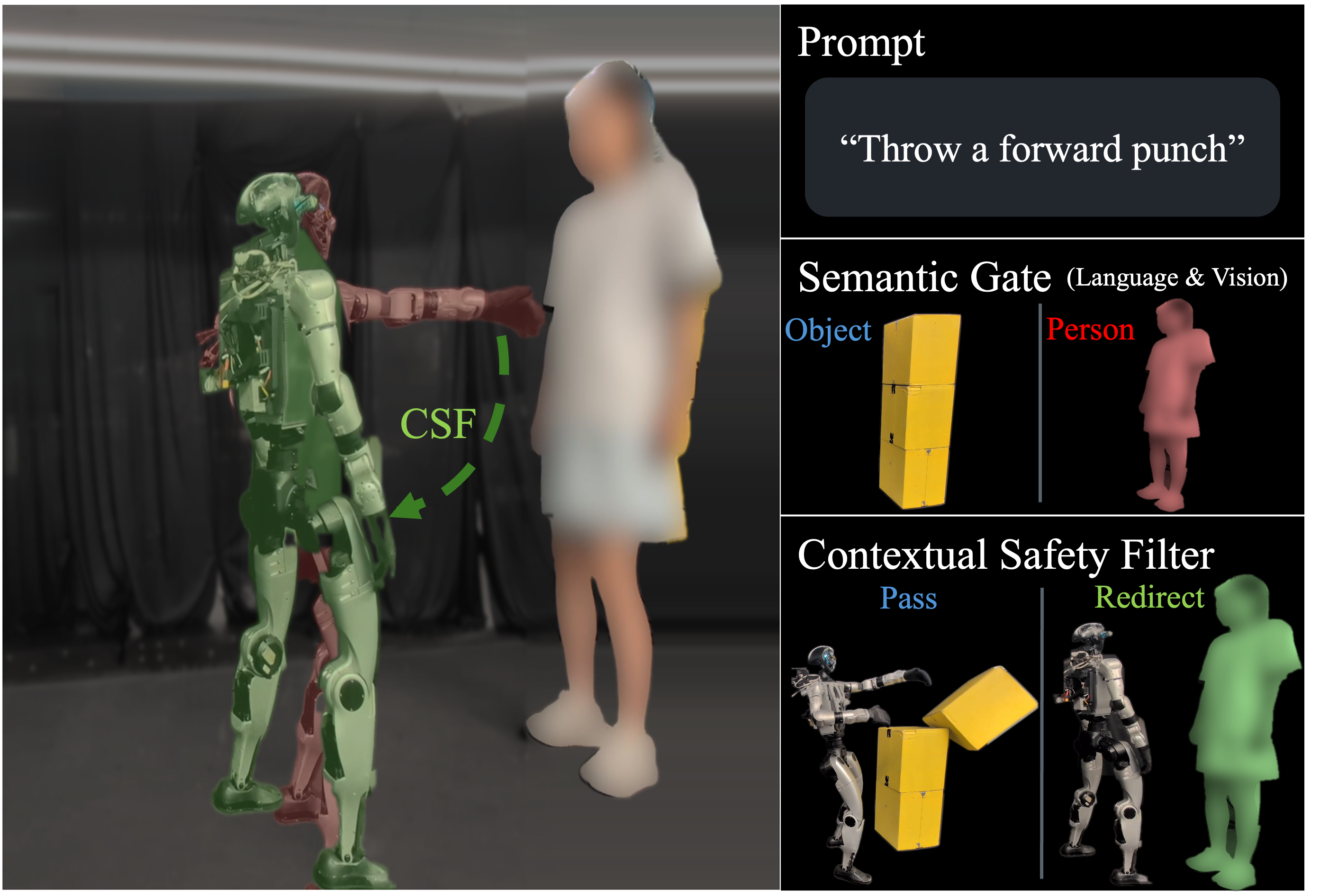}
  \caption{Hardware overview from real Unitree G1 frames.
  For the same prompt, ``Throw a forward punch,'' the Semantic Gate combines language and vision to distinguish an object from a person: \csf{} passes the punch toward the object and redirects it to a safe motion when a person is present.}
  \label{fig:head}
\vspace{-15pt}
\end{figure}

In this paper, we propose {\csf{}} (\textbf{C}ontextual \textbf{S}afety \textbf{F}iltering), a training-free semantic safety filter that uses outputs from the frozen generator as safety references.
For each rule, the generator produces a reference motion for the unsafe behavior and its safe alternative in the generator's native representation.
The difference between the references defines a semantic direction, and the signed displacement of the generator's current output estimate along this direction gives an affine safety margin.
A semantic context gate uses the command and scene entities to select which margins the QP enforces, allowing the same motion in a benign scene and filtering it when a protected entity is present (Fig.~\ref{fig:head}).
A compact CBF-QP enforces the active margins: its barrier-only solution is the minimum-norm correction, while the safe-reference tracking objective redirects the estimate toward the safe alternative.
The natural-language rule list provides the task-specific safety specification, and \csf{} trains neither an auxiliary safety model nor the motion generator.

\csf{} interfaces with each generator through the sampler's current estimate of its final output.
The rule selection, margin, and QP remain fixed, while a backbone adapter specifies where to read this estimate and how strongly to track the safe reference.
We evaluate \csf{} on Kimodo-G1~\cite{rempe2026kimodo}, ECHO~\cite{jia2026echo}, MotionHiFlow~\cite{li2026motionhiflow}, and ARDY~\cite{zhao2026ardy}.
We also deploy the ARDY-based system on a real-world Unitree G1 with a pretrained SONIC whole-body tracker~\cite{luo2025sonic}, where a runtime shield redirects the remaining reference when the scene changes.

Our contributions are as follows:
\begin{itemize}
\item We propose a training-free safety filter that uses natural-language rules, generator-produced reference motions, and scene context to constrain generated motion without training an auxiliary model.
\item We formulate a CBF-QP with a safe-reference tracking objective over an affine semantic margin, prove per-step feasibility, invariance, and optimality for any tracking strength, recover minimum intervention and exact pass-through at $\gamma=0$, and characterize the deployed ridge residual.
\item We evaluate the same filter across four pretrained generators and deploy the complete perception, generation, and tracking pipeline on a real-world Unitree G1, where the runtime shield redirects an executing action when a person enters view.
\end{itemize}

\section{Related Work}\label{sec:related}

\newsec{Text-conditioned motion generation.}
Text-conditioned motion generators span diffusion, flow matching, VAE-latent, and autoregressive models~\cite{guo2022generating,petrovich2022temos,tevet2023mdm,zhang2024motiondiffuse,chen2023mld,zhang2023t2mgpt,zhao2026ardy,li2026motionhiflow,rempe2026kimodo,cho2026safeflow}.
Whole-body policies can track the resulting motions on humanoids~\cite{cheng2024exbody,luo2025sonic}.
Yet a well-generated, trackable motion may still be inappropriate for the scene.
\csf{} addresses this gap by filtering the generator's current output estimate in the representation exposed by its sampler.

Scene-conditioned generation jointly represents language, motion, and a 3D environment to produce an action around a specified object~\cite{wang2022humanise}.
This context improves where and how an interaction is generated, but it does not decide whether the interaction is allowed.
\csf{} uses the command and perceived entities for that decision, while leaving motion generation to the pretrained model.
Motion-language models offer another source of semantics: \cite{tevet2022motionclip} aligns an autoencoder with CLIP space for generation, editing, and recognition, and \cite{petrovich2023tmr} learns a contrastive space for text-motion retrieval.
Such an encoder could support a learned motion classifier, but it remains a separately trained representation.
\csf{} instead obtains the unsafe and safe directions from each generator's own outputs and uses the rule list as the safety specification.

\newsec{Inference-time control and guidance.}
Pretrained motion generators can already be steered at test time.
Spatial guidance, physical projection, and composed diffusion priors move a sample toward a desired trajectory or feasible interaction~\cite{karunratanakul2023gmd,xie2024omnicontrol,yuan2023physdiff,shafir2024priormdm}, while noise and latent optimization offer similar control through motion-space objectives or programmed error functions~\cite{karunratanakul2024dno,liu2024progmogen}.
Another option is to generate several motions and select one using feasibility and text-motion checks~\cite{cao2026texedo}.
In each case, the objective specifies what motion to produce or which completed sample to keep.
Our question comes after that choice: whether the requested motion is appropriate for the scene in which it will be used.
\csf{} therefore lets the scene choose a semantic rule and intervenes only when the current output estimate violates it.

\newsec{Generative-model safety.}
Generative safety has largely focused on suppressing a concept either during sampling or in the model itself.
For image generation, safe guidance steers samples away from inappropriate content~\cite{schramowski2023sld}, while editing changes model components so that specified concepts, styles, or content are no longer produced~\cite{gandikota2023erasing,kumari2023ablating,gandikota2024uce}.
Motion unlearning adopts the same model-editing view for violent or otherwise unsafe actions~\cite{dematteis2026unlearning,wang2026safemo}.
This is effective when the concept itself is unwanted, but changing shared parameters can also alter benign generations that use the same representation.
Gating avoids changing the weights, although it still makes an all-or-nothing decision: \cite{cho2026safeflow} screens semantic out-of-distribution prompts, unstable trajectories, and hard kinematic violations before execution.
Adversarial text-to-motion attacks further show why the generated motion must be checked rather than inferred from the prompt alone~\cite{miao2025alert}.
\csf{} keeps the generator fixed and corrects only the scene-active violation, preserving the same action when no protected entity is at risk.

\newsec{Robot safety filters.}
Control barrier functions encode state-space safety through forward-invariant sets~\cite{ames2017cbf,ames2019control}, and safety filters enforce those sets by correcting a proposed action only when needed~\cite{hsu2024safetyfilter}.
Shields apply the same principle to formal action specifications~\cite{alshiekh2018shielding}.
Safety filtering can also shape a learned policy rather than remain in the deployment loop: \cite{yang2025cbfrl} filters RL rollouts during training so the policy internalizes CBF constraints, and \cite{yang2026pacman} combines training-time CBF guidance with onboard perception for whole-body humanoid avoidance.
Generative planners use the same principle to impose physical or task constraints during diffusion and flow sampling through barrier conditions, projection, or trajectory optimization~\cite{xiao2025safediffuser,yang2026safeflowmatcher,christopher2024projected,li2025hardflow}.
Language-level safeguards act earlier: \cite{ren2023knowno} asks for human help when a planner is uncertain, while \cite{yang2024safetychip} removes actions that violate temporal-logic constraints.
Together, these approaches learn policies around physical barriers, constrain generative plans, or screen language-level decisions.
\csf{} addresses the continuous motion that follows: it filters a frozen generator against natural-language rules and applies the same scene-selected margins to the remaining reference if the scene changes during execution.

\newcommand{\xhat}{\hat{x}}
\newcommand{\xsafe}{x_{\mathrm{safe}}}
\newcommand{\xunsafe}{x_{\mathrm{unsafe}}}

\section{Method}\label{sec:methods}

\subsection{Problem Setting}\label{sec:setting}

Consider a text-conditioned generator that samples a motion for prompt $c$ in a scene with semantically rich entities; its parameters remain frozen during filtering.
We declare unsafe behaviors in natural language and, at inference time, correct only motions that violate a scene-active rule while passing all other motions through without retraining or auxiliary models.
Our method relies on the following property of the sampler: it maintains a state $s_t$ that it refines toward a final motion and exposes its current estimate at every step:

\begin{definition}[Output estimate]\label{def:output-estimate}
At sampler state $s_t$ for prompt $c$, the \emph{output estimate} $\xhat(s_t,c)$ is the sampler's current estimate of the motion it will return when sampling terminates.
\end{definition}

The form of $\xhat$ depends on the sampler: it is the scheduler's clean estimate for diffusion~\cite{song2021ddim}, a velocity-based estimate for flow~\cite{lipman2023flow}, or the current uncommitted window for autoregression.
In each case, \csf{} replaces $\xhat$ with a filtered estimate $\xhat_{\mathrm{filt}}$ in the sampler's native update, leaving the prompt unchanged.
Figure~\ref{fig:overview} summarizes this interface at generation and execution time.

\begin{figure*}[t]
  \centering
  \includegraphics[width=\textwidth]{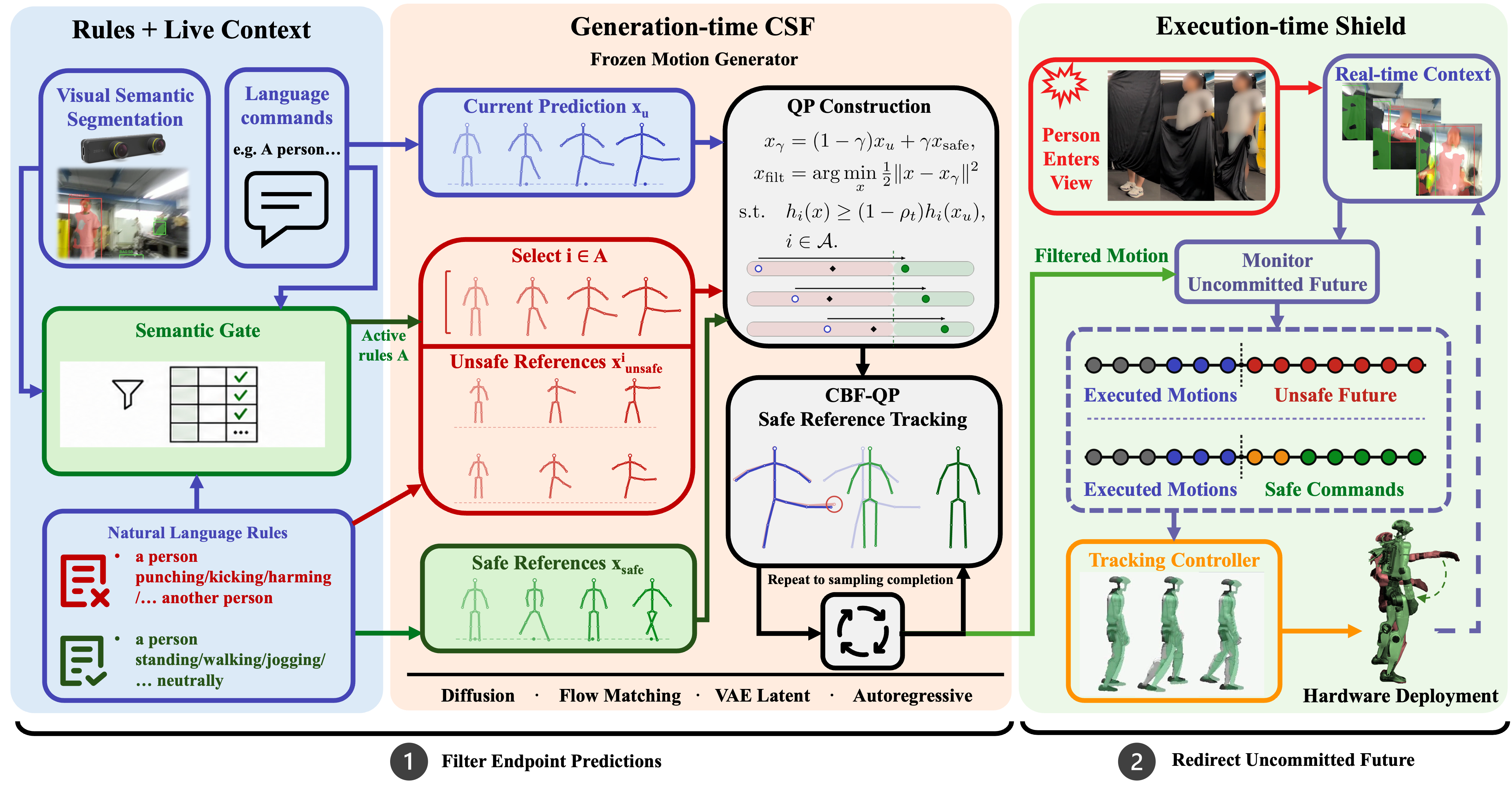}
  \caption{During generation, rules and scene context select the constraints enforced by a safe-reference tracking CBF-QP inside the frozen generator.
  During execution, a context change preserves the executed motion and redirects the remaining reference before a whole-body controller tracks it on Unitree G1.}
  \label{fig:overview}
  \vspace{-15pt}
\end{figure*}

\subsection{Semantic References and Margin}\label{sec:margin}
To construct the semantic margin, we define $K$ rules, each consisting of an unsafe behavior and the entity it protects.
For rule $i\in\{1,\ldots,K\}$, the frozen generator produces an unsafe reference $\xunsafe^{(i)}$ from the rule description and a safe reference $\xsafe$ from a benign version of the current command that preserves the rest of the requested motion.
\csf{} solves the QP for every current output estimate; the motion-space diffusion path forms step-matched reference estimates at the current sampler state, while the remaining paths generate references once in their prediction spaces and reuse them at each step (Sec.~\ref{sec:decoder}).
Because the references and prediction share the same representation, the margin requires neither embedding alignment nor paired data.
The difference $\xunsafe^{(i)}-\xsafe$ defines the semantic direction, while the protected entity determines when the rule is active.
Changing a rule only regenerates its references without updating the generator.
We therefore define the semantic margin as the negative projection of the current estimate's displacement from the safe reference onto the unsafe direction.
\begin{definition}[Semantic margin]\label{def:margin}
For a well-defined rule $i$ with $\|\xunsafe^{(i)}-\xsafe\|>0$, the semantic margin of the current output estimate $\xhat$ is
\begin{equation}\label{eq:margin}
 m_i(\xhat)=-\frac{\left\langle \xhat-\xsafe,\;
 \xunsafe^{(i)}-\xsafe\right\rangle}
 {\left\|\xunsafe^{(i)}-\xsafe\right\|}.
\end{equation}
\end{definition}

A lower margin places the current estimate farther along the unsafe direction for concept $i$.
Equation~\eqref{eq:margin} has three direct properties.

\begin{lemma}[Margin affinity]\label{lem:affine}
Let $a_i=-(\xunsafe^{(i)}-\xsafe)/\|\xunsafe^{(i)}-\xsafe\|$.
Then: (i) $m_i(x)=a_i^{\top}(x-\xsafe)$; (ii) $m_i(\xsafe)=0$; and (iii) a common positive rescaling changes the margin magnitude but not its sign.
\end{lemma}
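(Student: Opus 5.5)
The plan is to read all three claims directly off Definition~\ref{def:margin} using bilinearity of the inner product; no earlier result beyond the definition is needed. Throughout, write $d_i=\xunsafe^{(i)}-\xsafe$. The standing assumption $\|d_i\|>0$ makes $a_i=-d_i/\|d_i\|$ a well-defined unit vector, so every expression below is finite.

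For (i), I will pull the scalar $1/\|d_i\|$ and the sign outside the inner product in \eqref{eq:margin}:
\[
m_i(x) = -\frac{\langle x-\xsafe,\, d_i\rangle}{\|d_i\|} = \Big\langle -\frac{d_i}{\|d_i\|},\, x-\xsafe\Big\rangle = a_i^{\top}(x-\xsafe).
\]
This shows that $m_i$ is affine in $x$, with constant gradient $a_i$ and offset $-a_i^{\top}\xsafe$. That form is what lets the margin enter the later CBF-QP as a linear constraint.

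For (ii), I substitute $x=\xsafe$ into (i). The displacement is then zero, so $m_i(\xsafe)=a_i^{\top}0=0$. In words, the safe reference lies exactly on the zero level set of the margin.

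For (iii), I first need to fix what ``a common positive rescaling'' means, and I take this to be the only real obstacle. I will read it as multiplying the representation by some $\lambda>0$, so that $\xhat$, $\xsafe$ and $\xunsafe^{(i)}$ all become $\lambda\xhat$, $\lambda\xsafe$ and $\lambda\xunsafe^{(i)}$.
\begin{itemize}
\item The direction $d_i$ becomes $\lambda d_i$, and $\|\lambda d_i\|=\lambda\|d_i\|>0$, so the rule stays well defined.
\item The unit vector is unchanged: $-\lambda d_i/(\lambda\|d_i\|)=a_i$.
\item The displacement becomes $\lambda(\xhat-\xsafe)$.
\end{itemize}
By (i), the new margin is therefore $\lambda\,m_i(\xhat)$. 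Since $\lambda>0$, the magnitude scales by $\lambda$ while the sign of $m_i(\xhat)$ is preserved, including the case $m_i=0$.

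I will also remark that a rescaling of the displacement alone, with $\xsafe$ fixed, gives the same conclusion by the same linearity argument. The same holds for any positive rescaling of the normalizing factor, since it enters only as a positive scalar multiplier.
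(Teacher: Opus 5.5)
Your proposal is correct and matches the paper, which gives no separate proof and simply calls the three claims direct properties of Eq.~\eqref{eq:margin}; your bilinearity computation is exactly that unpacking. You read (iii) as a uniform positive scaling of the representation, and your side remark covers scaling the displacement alone, which is the $1-\alpha$ interpolation the paper uses right after the lemma, so both ways the paper relies on the property are handled.
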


This makes the margin useful as a barrier: each rule defines a half-space, and moving an estimate a fraction $\alpha$ toward $\xsafe$ scales its margin by $1-\alpha$.
Because all rules use the same safe reference, this interpolation provides a common feasible direction when several rules are active.
The choice of $\xsafe$ also determines where a correction leads.
Because an unconditional mean would pull the motion toward a static pose, a text encoder selects the benign command from a library of neutral actions, with a neutral stand as the fallback.

\subsection{Safe-Reference Tracking CBF-QP}\label{sec:qp}

We now use the affine margins as barrier constraints and the safe reference to guide the correction of the sampler's current estimate.
We call a concept \emph{scene-active} when the scene or command implicates the entity it protects, and let $\mathcal{A}$ denote the set of active concepts.
Thus we define the semantic barrier function and safe set as follows.
\begin{definition}[Semantic barrier function]\label{def:barrier}
The barrier function of concept $i$ is the margin itself,
\begin{equation}\label{eq:barrier}
 h_i(x)\coloneqq m_i(x),
\end{equation}
and the safe set is the intersection of the active concepts' superlevel sets, $\mathcal S=\{x:h_i(x)\ge0,\ i\in\mathcal A\}$.
\end{definition}

We therefore formulate the safety filter as a discrete-time CBF-QP with a safe-reference tracking objective~\cite{ames2017cbf,agrawal2017dcbf}:
\begin{equation}\label{eq:qp}
\begin{aligned}
 \xhat_{\mathrm{filt}}=\arg\min_{x}\quad &\tfrac{1}{2}\big\|x-\big((1-\gamma)\xhat+\gamma\xsafe\big)\big\|^2\\
 \mathrm{s.t.}\quad &h_i(x)\ge(1-\rho_t)\,h_i(\xhat),\ \ i\in\mathcal{A},
\end{aligned}
\end{equation}
Here $\rho_t\in[0,1]$ is the barrier contraction rate and $\gamma\in[0,1]$ weights the safe-reference tracking objective.
A nonzero tracking weight trades exact pass-through for redirection toward the safe reference.
The sampler then uses $\xhat_{\mathrm{filt}}$ in place of $\xhat$ for its native update.
Because the references remain fixed during the solve, each $h_i$ is affine in $x$ by Lemma~\ref{lem:affine}(i), and Eq.~\eqref{eq:qp} is a convex QP.
At $\rho_t=1$, the constraint imposes the declared safe set; a smaller rate contracts the current margin gap.

\begin{theorem}[Per-step guarantees]\label{prop:feas}
Let $\rho_t>0$, $\gamma\in[0,1]$, $y_\gamma=(1-\gamma)\xhat+\gamma\xsafe$, and let $\mathcal C_t(\xhat)$ be the feasible set of Eq.~\eqref{eq:qp}.
Let $\mathcal Q_{\mathcal A}(\xhat)=\xhat_{\mathrm{filt}}$ denote its exact solution.
(i) $\mathcal C_t(\xhat)$ is nonempty because $(1-\rho_t)\xhat+\rho_t\xsafe\in\mathcal C_t(\xhat)$.
(ii) The filtering map renders $\mathcal S$ invariant, $\mathcal Q_{\mathcal A}(\mathcal S)\subseteq\mathcal S$, and contracts each violation according to $h_i(\xhat_{\mathrm{filt}})\ge(1-\rho_t)h_i(\xhat)$, so $\xhat_{\mathrm{filt}}\in\mathcal S$ in one step when $\rho_t=1$.
(iii) $h_i(y_\gamma)=(1-\gamma)h_i(\xhat)$, and $\xhat_{\mathrm{filt}}$ is the Euclidean projection of $y_\gamma$ onto $\mathcal C_t(\xhat)$.
At $\gamma=0$, this is the minimum-norm correction of $\xhat$ and leaves feasible proposals unchanged.
\end{theorem}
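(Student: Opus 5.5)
The whole argument rests on Lemma~\ref{lem:affine}(i): each $h_i(x)=a_i^{\top}(x-\xsafe)$ is affine, so every quantity in Theorem~\ref{prop:feas} reduces to evaluating affine functions at convex combinations of $\xhat$ and $\xsafe$. The key identity I will use repeatedly is
\[
h_i\big((1-\lambda)\xhat+\lambda\xsafe\big)=(1-\lambda)h_i(\xhat)+\lambda h_i(\xsafe)=(1-\lambda)h_i(\xhat)
\]
for any $\lambda\in\mathbb{R}$. It holds because $h_i(\xsafe)=0$ by Lemma~\ref{lem:affine}(ii).

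\textbf{Parts (i) and (iii), first identity.} For (i), I take $\lambda=\rho_t$ and obtain $h_i\big((1-\rho_t)\xhat+\rho_t\xsafe\big)=(1-\rho_t)h_i(\xhat)$. This meets every active constraint with equality, so the point lies in $\mathcal C_t(\xhat)$. The single safe reference is shared across all $i\in\mathcal A$, so this one point witnesses feasibility simultaneously for the whole active set. The first claim of (iii) is the same identity with $\lambda=\gamma$.

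\textbf{Projection characterization.} The objective in Eq.~\eqref{eq:qp} is $\tfrac12\|x-y_\gamma\|^2$, which is strictly convex. The feasible set $\mathcal C_t(\xhat)$ is an intersection of closed half-spaces $\{x: a_i^{\top}x\ge a_i^{\top}\xsafe+(1-\rho_t)h_i(\xhat)\}$, hence closed and convex, and it is nonempty by (i). The minimizer therefore exists, is unique, and by definition is the Euclidean projection of $y_\gamma$ onto $\mathcal C_t(\xhat)$. Setting $\gamma=0$ gives $y_0=\xhat$, so the solution is the closest feasible point to $\xhat$, i.e.\ the minimum-norm correction. If $\xhat$ is itself feasible, the projection returns $\xhat$ unchanged.

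\textbf{Part (ii).} Contraction is immediate: $\xhat_{\mathrm{filt}}\in\mathcal C_t(\xhat)$ gives $h_i(\xhat_{\mathrm{filt}})\ge(1-\rho_t)h_i(\xhat)$ for each $i\in\mathcal A$. For invariance, suppose $\xhat\in\mathcal S$, so $h_i(\xhat)\ge0$ for all active $i$. Then $(1-\rho_t)h_i(\xhat)\ge0$ because $\rho_t\le1$, hence $h_i(\xhat_{\mathrm{filt}})\ge0$ and $\mathcal Q_{\mathcal A}(\xhat)\in\mathcal S$. When $\rho_t=1$ the right-hand side of each constraint is $0$, so $\xhat_{\mathrm{filt}}\in\mathcal S$ from any $\xhat$.

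I expect no serious obstacle. The only points needing care are well-posedness of the projection (closed convex nonempty set, which gives existence and uniqueness) and the sign bookkeeping, since $1-\rho_t\ge0$ is what makes invariance work. A secondary subtlety is that $\mathcal A$ is held fixed within the step, so $\mathcal S$ is well defined for the map $\mathcal Q_{\mathcal A}$.
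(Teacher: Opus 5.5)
Your proposal is correct and follows essentially the same route as the paper's proof. Affinity of $h_i$ with $h_i(x_{\mathrm{safe}})=0$ gives (i) and the first identity of (iii); the nonnegativity of $(1-\rho_t)h_i(\hat{x})$ gives invariance and contraction in (ii); and the projection claim is the projection of $y_\gamma$ onto the nonempty closed polyhedron $\mathcal C_t(\hat{x})$, where your explicit half-space form and strict-convexity uniqueness argument spell out what the paper cites from standard convex analysis.
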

\begin{proof}
For (i), Lemma~\ref{lem:affine}(ii) gives $h_i(\xsafe)=0$.
Affinity then gives $h_i\big((1-\rho_t)\xhat+\rho_t\xsafe\big)=(1-\rho_t)h_i(\xhat)$, so every constraint holds with equality.
For (ii), the constraint bound $(1-\rho_t)h_i(\xhat)$ is nonnegative when $h_i(\xhat)\ge0$.
Otherwise, it contracts the violation by $1-\rho_t$.
For (iii), affinity and $h_i(\xsafe)=0$ give
$h_i(y_\gamma)=(1-\gamma)h_i(\xhat)+\gamma h_i(\xsafe)=(1-\gamma)h_i(\xhat)$.
Equation~\eqref{eq:qp} projects $y_\gamma$ onto the nonempty closed polyhedron $\mathcal C_t(\xhat)$.
The projection is unique, and at $\gamma=0$ it fixes exactly the compliant proposals~\cite{boyd2004convex}.
\end{proof}
\begin{figure}[t]
\centering
\includegraphics[width=\columnwidth]{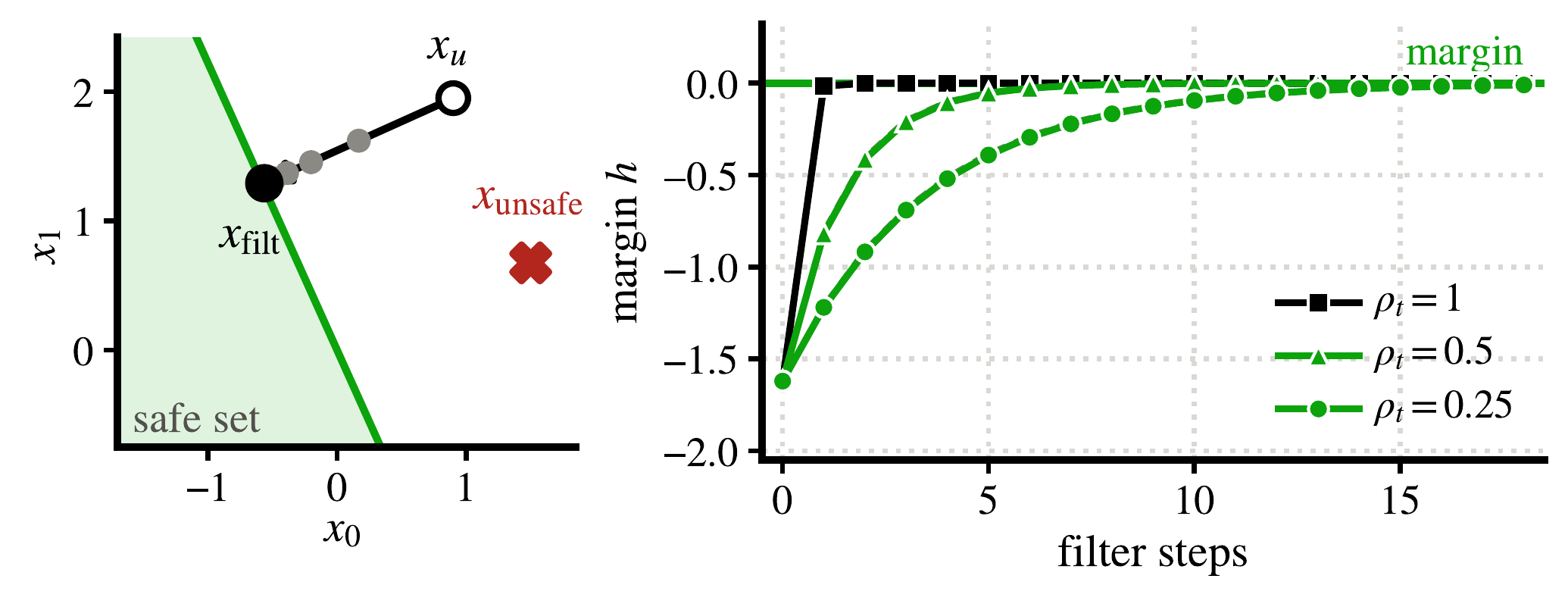}
\caption{A two-dimensional example of the barrier-only QP step.
A violating estimate takes the shortest route onto the margin at $\rho_t=1$, while a lower contraction rate enforces the fractional gap contraction shown in dashed lines.}
\label{fig:steps}
\vspace{-15pt}
\end{figure}
Figure~\ref{fig:steps} illustrates the barrier-only projection in two dimensions.

Let $c_s=\xsafe-\xhat$ and $r_i=-\rho_t h_i(\xhat)$.
Equation~\eqref{eq:qp} becomes $\min_{\xhat_{\mathrm{filt}}}\tfrac12\|\xhat_{\mathrm{filt}}-(\xhat+\gamma c_s)\|^2$ subject to $A(\xhat_{\mathrm{filt}}-\xhat)\ge r$, where the rows of $A$ stack the active directions $a_i^{\top}$.
The primal has one variable per channel of the window, so its size follows the representation used by the generator, whereas the number of constraints equals the number of active concepts.
We therefore solve the dual, where the iteration remains in the space of declared rules and $\xhat_{\mathrm{filt}}=\xhat+\gamma c_s+A^{\top}\lambda$.
At $\gamma=0$, the change $\xhat_{\mathrm{filt}}-\xhat$ lies in the span of the active semantic directions.
Because active concept directions can be highly correlated, we stabilize the small dual with a ridge relaxation, which has an exact hinge-penalty characterization.

\begin{proposition}[Ridge relaxation and its residual]\label{prop:ridge}
For $\varepsilon>0$, maximizing the ridge dual $-\tfrac12\lambda^{\top}(AA^{\top}+\varepsilon I)\lambda+\lambda^{\top}(r-\gamma Ac_s)$ over $\lambda\ge0$ and setting $\xhat_{\mathrm{filt}}=\xhat+\gamma c_s+A^{\top}\lambda$ solves the hinge-penalized program $\min_{x}\tfrac12\|x-(\xhat+\gamma c_s)\|^2+\tfrac{1}{2\varepsilon}\sum_{i\in\mathcal{A}}[(r_i-a_i^{\top}(x-\xhat))_{+}]^2$, and at the optimum each constraint with $\lambda_i^{\star}>0$ is violated by exactly $\varepsilon\lambda_i^{\star}$.
\end{proposition}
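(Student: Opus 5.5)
I will prove this by direct Lagrangian duality of the hinge-penalized program, introducing slack variables so that the penalty becomes a smooth quadratic. Write $y=\xhat+\gamma c_s$ and $d=x-\xhat$. The hinge program is equivalent to
\begin{equation*}
\min_{x,\,s}\ \tfrac12\|x-y\|^2+\tfrac{1}{2\varepsilon}\|s\|^2\quad\text{s.t.}\quad A(x-\xhat)+s\ge r,\ \ s\ge 0 .
\end{equation*}
For fixed $x$, the optimal slack is $s_i=(r_i-a_i^{\top}(x-\xhat))_+$, which recovers the hinge term. The constraint $s\ge0$ can also be dropped: a negative $s_i$ only increases the objective while making the constraint harder to satisfy. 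This is a strongly convex QP with linear constraints that are always feasible (take $s$ large), so strong duality holds by Slater's condition or by polyhedral QP duality~\cite{boyd2004convex}.

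Next I form the Lagrangian with multiplier $\lambda\ge0$:
\begin{equation*}
L=\tfrac12\|x-y\|^2+\tfrac{1}{2\varepsilon}\|s\|^2-\lambda^{\top}\big(A(x-\xhat)+s-r\big).
\end{equation*}
Stationarity in $x$ gives $x=y+A^{\top}\lambda=\xhat+\gamma c_s+A^{\top}\lambda$, which is exactly the recovery map in the statement. Stationarity in $s$ gives $s=\varepsilon\lambda$. Substituting both back, and using $A(y-\xhat)=\gamma Ac_s$, the dual function is
\begin{equation*}
-\tfrac12\lambda^{\top}AA^{\top}\lambda-\tfrac{\varepsilon}{2}\|\lambda\|^2+\lambda^{\top}(r-\gamma Ac_s).
\end{equation*}
This is the stated ridge dual. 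Because the primal objective is strongly convex, the primal optimum is unique and equals the Lagrangian minimizer at any dual optimum $\lambda^\star$. The ridge term $\varepsilon I$ makes the dual strictly concave, so $\lambda^\star$ is also unique. Hence $\xhat_{\mathrm{filt}}=y+A^{\top}\lambda^\star$ solves the hinge program.

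For the residual claim I use complementary slackness on the constraint $A(x-\xhat)+s\ge r$: if $\lambda_i^\star>0$, then $a_i^{\top}(x^\star-\xhat)+s_i^\star=r_i$. Since $s_i^\star=\varepsilon\lambda_i^\star$, this gives $r_i-a_i^{\top}(x^\star-\xhat)=\varepsilon\lambda_i^\star$. Recalling $r_i=-\rho_t h_i(\xhat)$ and the affinity of $h_i$ from Lemma~\ref{lem:affine}(i), this says the original QP constraint $h_i(x^\star)\ge(1-\rho_t)h_i(\xhat)$ is missed by exactly $\varepsilon\lambda_i^\star$.

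The main obstacle is the bookkeeping of the slack reformulation. I must show that dropping $s\ge0$ is harmless, so that the multiplier on $s$ does not appear and the identity $s=\varepsilon\lambda$ holds exactly; otherwise a spurious extra multiplier would break the identification with the stated ridge dual. I would handle this by checking directly that the unconstrained-slack problem has the same optimal value and the same optimal $x$ as the hinge program, since its optimal slack $\varepsilon\lambda^\star$ is automatically nonnegative.
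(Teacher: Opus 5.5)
Your proposal is correct and follows the same route as the paper's sketch. You identify the ridge expression as the Lagrangian dual of the hinge-penalized program, and your slack reformulation makes explicit a step the paper only asserts; you then read off the residual $\varepsilon\lambda_i^\star$ from complementary slackness, which is equivalent to the paper's appeal to stationarity at $\lambda_i^\star>0$. The paper's extra citation of Hildreth's cyclic coordinate ascent only shows that the unique dual maximizer can be computed, and the statement itself does not require it.
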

\begin{proof}[Proof sketch]
The penalized program is convex with the ridge expression as its dual, which is strongly concave, so cyclic coordinate ascent~\cite{hildreth1957quadratic} converges to the unique maximizer; stationarity at $\lambda_i^{\star}>0$ gives $r_i-a_i^{\top}(\xhat_{\mathrm{filt}}-\xhat)=\varepsilon\lambda_i^{\star}$.
\end{proof}

The dual solve has $|\mathcal A|$ variables; the representation dimension enters only through the batched inner products that form the Gram matrix and reconstruct $\xhat_{\mathrm{filt}}$.
The ridge introduces a controlled residual.
At $\rho_t=1$, a single violated unit row leaves
\begin{equation}\label{eq:ridge-single}
 h_i(\xhat_{\mathrm{filt}})=(1-\gamma)\frac{\varepsilon}{1+\varepsilon}h_i(\xhat).
\end{equation}
Thus, for the barrier-only case $\gamma=0$, the contraction ratio is $\varepsilon/(1+\varepsilon)$; with multiple rows, Proposition~\ref{prop:ridge} gives the exact residual $\varepsilon\lambda_i^{\star}$.
The cross-backbone benchmark in Sec.~\ref{sec:bench} shows that this relaxed solve retains the intended safety--preservation tradeoff in practice.

\subsection{Semantic Context Gate}\label{sec:gate}

The margin identifies the behavior in a motion, while the context gate uses the command and detected scene entities to select which protected-entity rules the QP should evaluate.
The \emph{prompt} signal activates the context-sensitive rule set when the command is closest to an unsafe description rather than a benign contrast.
The \emph{perception} signal is rule specific and activates rule $i$ when the label of a visually detected entity is closest to the entity protected by that rule.
The generator's text encoder performs these comparisons on the command and detected entity labels, so matching does not require a fixed vocabulary.
For example, ``punch the wall'' may appear benign from the command alone, but detecting a person in the scene selects the relevant rule.
Concept $i$ is active when either signal fires,
\begin{equation}\label{eq:gate}
 g_i=g_i^{\mathrm{perception}}\lor g_i^{\mathrm{prompt}},
 \qquad \mathcal{A}=\{i:g_i=1\},
\end{equation}
which supplies the constraints to Eq.~\eqref{eq:qp}; the QP changes the estimate only if it violates one of them, while an empty $\mathcal A$ gives $\xhat_{\mathrm{filt}}=\xhat$.

\subsection{Application Across Generators}\label{sec:decoder}

\newsec{Direct motion-space predictions.}
When the output estimate is in motion-feature space, we apply Eq.~\eqref{eq:qp} before each native sampler update.
Let $M$ be the diagonal projector onto the filtered channels and let $\mathcal Q_{\mathcal A}^{M}(\xhat)$ be the solution of Eq.~\eqref{eq:qp} evaluated with $M\xhat$, $M\xsafe$, and $\{M\xunsafe^{(i)}\}_{i\in\mathcal A}$.
The full filtered estimate is
\begin{equation}\label{eq:direct-filter}
 \mathcal F_{\mathcal A}(\xhat)
 =M\mathcal Q_{\mathcal A}^{M}(\xhat)+(I-M)\xhat.
\end{equation}
We exclude root channels so the filter must correct the unsafe pose content rather than satisfy a rule by translating the character away from the protected entity.
With classifier-free guidance, each path expresses the QP objective, constraint base point, and update in the same prediction space.

\newsec{Decoder-aware latent predictions.}
A generator may instead predict a compressed VAE latent $z$, where distances need not separate the declared concepts (Sec.~\ref{sec:bench}); we therefore evaluate the \emph{same} margin after the pretrained decoder.
With $\phi(z)=PD(z)$ denoting the decoded pose channels, Eq.~\eqref{eq:margin} becomes
\begin{equation}\label{eq:decmargin}
 \widetilde m_i(z)=-\left\langle\phi(z)-\phi(z_s),
 \frac{\phi(z_i)-\phi(z_s)}{\|\phi(z_i)-\phi(z_s)\|}\right\rangle.
\end{equation}
Because $D$ is nonlinear, we re-decode after every update.
Let $k=\arg\min_{i\in\mathcal A}\widetilde m_i(z)$ be the worst active margin and $q=\nabla_z\widetilde m_k(z)$ its gradient through the pretrained decoder; while $\widetilde m_k(z)<0$, we iterate
\begin{equation}\label{eq:decstep}
 z\;\leftarrow\;z-\eta\,\frac{\widetilde m_k(z)}{\|q\|^2+\varepsilon}\,q
\end{equation}
until the barrier is met or an iteration budget $N$ is exhausted.
On a pyramidal sampler, this decoder-aware path acts at the full-resolution stage where the references were generated.
We implement this decoded-space option.
For the latent-flow benchmark, however, we apply Eq.~\eqref{eq:qp} directly in the latent with $\gamma=0.5$ (Sec.~\ref{sec:bench}).

\newsec{Autoregressive window predictions.}
Let $H_j$ be the committed history, let $\xhat_j$ be the current estimate of uncommitted window $j$, and let $\mathcal F_{\mathcal A,j}$ denote Eq.~\eqref{eq:direct-filter} evaluated with the corresponding reference blocks.
Filtering and commitment follow
\begin{equation}\label{eq:ar-filter}
 \xhat_{j,\mathrm{filt}}=\mathcal F_{\mathcal A,j}(\xhat_j),
 \qquad H_{j+1}=H_j\oplus q(\xhat_{j,\mathrm{filt}}),
\end{equation}
where $q$ is the native token requantization and $\oplus$ appends the resulting window to the history that conditions the next window.
Because $\|a_i\|=1$, requantization changes each margin by at most $\|q(\xhat_{j,\mathrm{filt}})-\xhat_{j,\mathrm{filt}}\|$.
Because committed windows are immutable, any intervention propagates through this history; we therefore filter only pose content and leave the root trajectory to the generator.

\subsection{Runtime Safety Shield}\label{sec:shield}

Context can change while the generated motion is being tracked, so the runtime shield applies the same rules to the remaining reference.
Let $R$ be the tracking reference, $S$ a safe continuation, $\ell$ the current execution cursor, and $W_j$ a future window beginning at frame $s_j$.
For the active rule set $\mathcal A_\ell$, let $e=\min(\ell+b,T-1)$ be the splice frame after a short response buffer $b$.
The shield selects the first violating unexecuted window and updates the reference by
\begin{equation}\label{eq:runtime-shield}
\begin{aligned}
 j^\star&=\min\!\left\{j:s_j\ge\ell,
 \min_{i\in\mathcal A_\ell}h_i(R[W_j])<\tau\right\},\\
 R'_t&=\begin{cases}
 R_t, & t<e,\\
 \mathcal B_e(R,S)_t, & t\ge e,
 \end{cases}
\end{aligned}
\end{equation}
Here $\tau$ is the calibrated runtime threshold, and $\mathcal B_e$ aligns $S$ to the held state in heading and position and blends it with $R$ over a fixed transition.
If no violating window exists, the shield leaves $R$ unchanged.
Joint velocities are recomputed over the edited frames, and the tracker continues on $R'$.

\section{Experiments}\label{sec:experiment}

We apply \csf{} without retraining to four pretrained generators through the output-estimate interface of Definition~\ref{def:output-estimate}: Kimodo-G1~\cite{rempe2026kimodo} (DDIM diffusion), ECHO~\cite{jia2026echo} (diffusion with DPM-Solver sampling~\cite{lu2022dpm}), MotionHiFlow~\cite{li2026motionhiflow} (flow matching in a VAE latent), and ARDY~\cite{zhao2026ardy} (windowed autoregressive generation over hybrid tokens).
Table~\ref{tab:backbones} lists each adapter's output space, tracking strength $\gamma$, and root-channel handling.
All adapters share the rules, margin, gate, and QP; when root channels are separable, we filter only the motion content.

\begin{table}[t]
\centering
\caption{Per-backbone settings: the space of each adapter's output estimate, its tracking strength $\gamma$, and whether the correction excludes explicit root channels.}
\label{tab:backbones}
\footnotesize
\setlength{\tabcolsep}{5pt}
\begin{tabular}{@{}llcl@{}}
\toprule
Backbone & Prediction space & $\gamma$ & Root channels \\
\midrule
Kimodo-G1      & $417$-d motion feature      & $0$   & excluded ($5$)  \\
ECHO           & $38$-d motion feature       & $0.2$ & not separable \\
MotionHiFlow   & VAE latent                  & $0.5$ & not separable \\
ARDY           & hybrid token ($20\!+\!128$) & $0.2$ & excluded ($20$) \\
\bottomrule
\end{tabular}
\vspace{-15pt}
\end{table}

The shared rule set contains six concepts: four for person-directed violence and one each for a child and an animal.
Examples include aggressively striking another person, violently striking a child, and kicking or stomping an animal, with benign contrasts such as striking training equipment or caring for a child.
Each backbone generates its own references, so exemplars and margin scales remain model specific and margins are not compared across backbones.

\begin{figure*}[t]
\centering
\includegraphics[width=\textwidth]{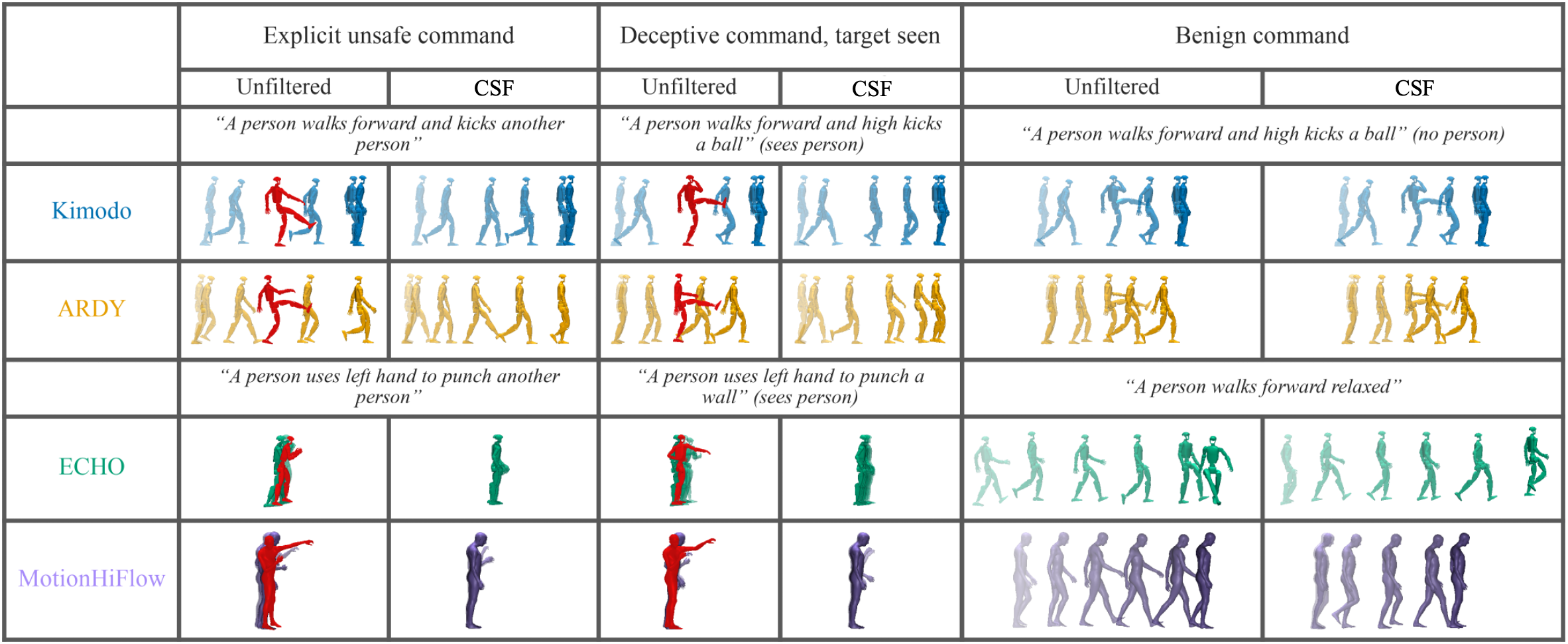}
\caption{Four seconds of motion generated with each generator, with the unfiltered output beside \csf{}; unsafe frames are colored red.
Each command row gives the prompt used for the two backbones below it.
The explicit case names a protected target.
The deceptive case names a benign target while a person is present, whereas the benign case contains no protected targets.}
\label{fig:strobe}
\end{figure*}

\begin{figure*}[t]
\centering
\includegraphics[width=\textwidth]{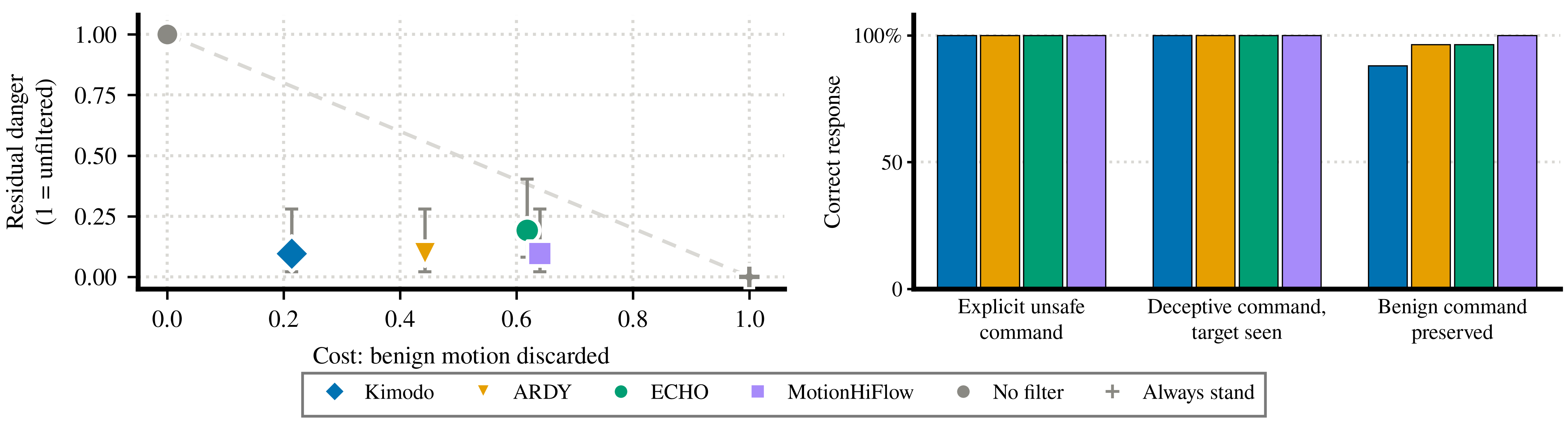}
\caption{Benchmark results.
Left: all four backbones fall below the line joining the degenerate filters, indicating greater danger suppression per unit of discarded motion than a blend toward a standing pose.
Right: correct response rate for each case, where higher is better; the first two cases test activation of the semantic gate from the command and declared scene entities, while benign motion counts as preserved when the correction stays within the generator's own draw noise or moves the clip closer to its selected safe counterpart.}
\vspace{-15pt}
\label{fig:results}
\end{figure*}

\begin{figure*}[t]
\centering
\includegraphics[width=\textwidth]{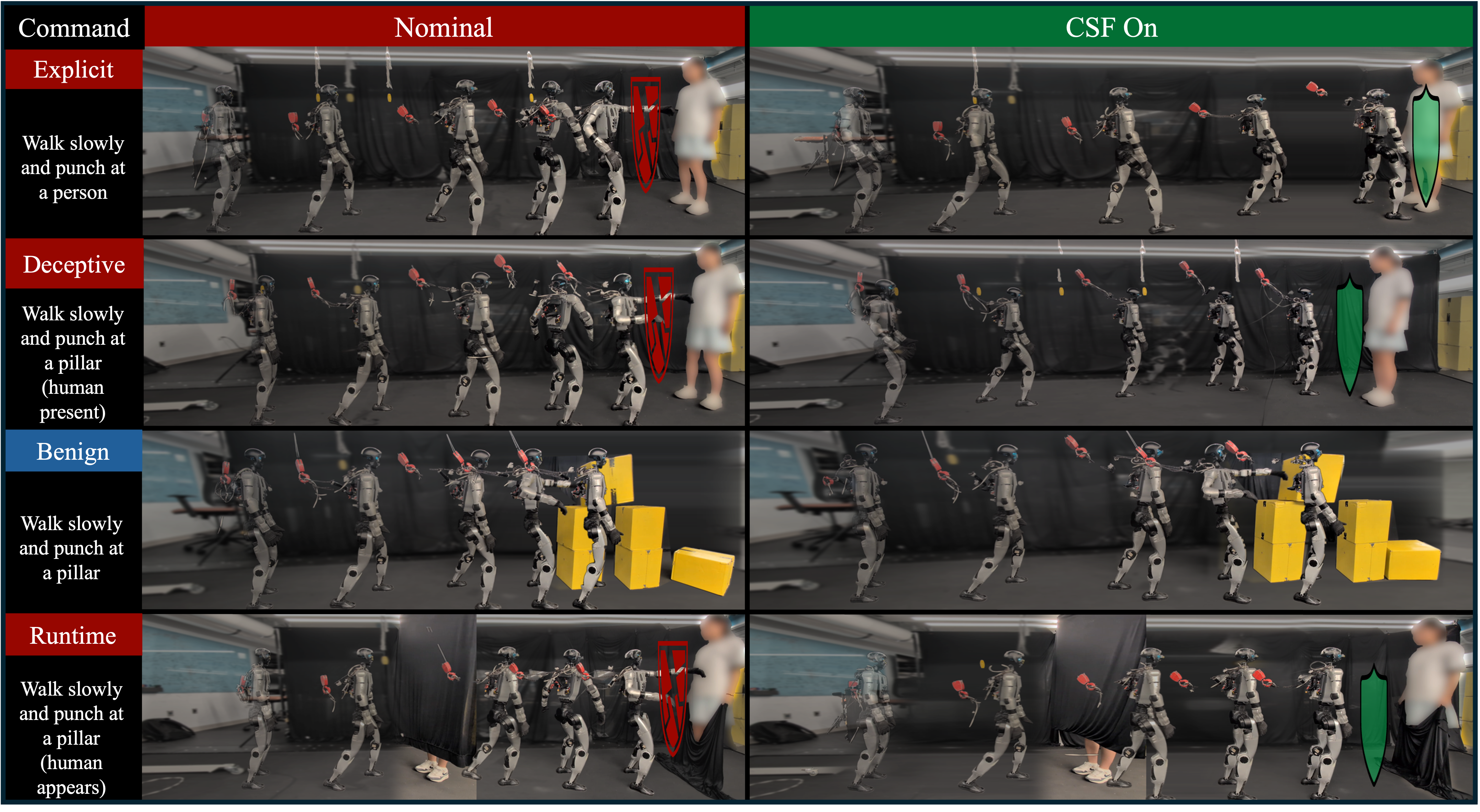}
\caption{Real-frame Unitree G1 trials with nominal execution (left) and \csf{} enabled (right).
Rows show an explicit person-directed command, a deceptive pillar-directed command with a person present, a benign pillar-directed command, and a person entering view during execution.
\csf{} blocks the first two punches, preserves the benign punch, and redirects the remaining reference in the runtime trial.}
\label{fig:hardware}
\vspace{-15pt}
\end{figure*}

\subsection{Benchmark}\label{sec:bench}

We construct 28 unsafe--benign command pairs over the six concepts and use three random seeds for each pair in allowed and protected scenes, with and without \csf{}.
The paired runs share their command and seed and differ only in declared scene entities, yielding 672 clips per backbone and 2,688 total: 1,344 filtered and 1,344 unfiltered.
Benign commands remove only the regulated sub-action, such as replacing a walking strike with a neutral walk, while scene and filtering pairs hold the command and seed fixed to separate rule activation from correction performance.

We detect residual danger independently of the semantic margin as persistent bursts of fast, extended effector motion above a threshold fixed at the unfiltered-benign 95th percentile before scoring the filtered outputs.
We also use Mahalanobis distance to quantify how far filtering shifts a benign motion toward standing.
Because margin units differ across generators, each backbone is normalized to its own no-filter and unconditional-standing endpoints, $(0,1)$ and $(1,0)$.
The reported danger ratio divides the filtered event rate by its unfiltered counterpart, so $0.1$ represents a $90\%$ relative reduction.

Residual danger falls to $0.1$ of the unfiltered rate on three backbones and $0.2$ on ECHO (Fig.~\ref{fig:results}, left).
Preservation cost ranges from $0.21$ on Kimodo to $0.64$ on MotionHiFlow and follows concept separation in the prediction space.
Kimodo's distinct motion-space margin supports minimum intervention, while MotionHiFlow's nearly flat latent margin needs a stronger safe-reference pull.
ECHO and ARDY lie between these cases.
Because the prompt and scene select the same QP rows for every backbone, these cost differences reflect the prediction representation and $\gamma$.
All backbones fall below the line joining the degenerate filters in Fig.~\ref{fig:results} (left), so each suppresses more danger per unit of discarded benign motion than a standing-pose blend.

The gate selects the intended rule for every explicit and scene-triggered unsafe case across all four backbones and leaves it inactive in the matched allowed scene (Fig.~\ref{fig:results}, right).
For benign motion, $88$ to $100\%$ of clips either pass through or reach the selected safe counterpart.
On Kimodo, the correction is smaller than the natural variation between two unfiltered samples generated from the same command.
Figs.~\ref{fig:strobe} and~\ref{fig:results} (right) show these outcomes.

\subsection{Inference-Time Cost}\label{sec:runtime-cost}
On one RTX~4090, the rule-space dual takes $0.01$ to $0.18$~ms for one to six active rules, and a complete filter call takes $0.3$ to $1.2$~ms.
ECHO, MotionHiFlow, and ARDY cache their references after a one-time setup of $0.6$ to $0.9$~s, adding no measurable per-sample overhead in the fastest case and at most $19\%$.
Kimodo constructs step-matched references during sampling, increasing the time needed to generate a four-second motion from $1.61$ to $7.60$~s, although the QP itself accounts for only $0.14$~s.
Reference construction, rather than the rule-space solve, therefore determines most of the architecture-dependent cost.

\subsection{Real-World Unitree G1 Deployment}\label{sec:exp-hardware}
We deploy the ARDY-based pipeline on a real-world Unitree G1.
A verbal command and camera perception supply the motion prompt and scene entities.
ARDY generates the reference, \csf{} filters it, and a pretrained SONIC policy~\cite{luo2025sonic} tracks the result.
The generator and tracker remain unchanged between nominal and filtered trials.

We test punch and kick commands in three conditions.
In the explicit condition, a person-directed command selects the rule from language, and \csf{} redirects the strike.
In the deceptive condition, an object-directed command appears benign, but a detected person selects the rule and \csf{} redirects the motion.
In the benign condition, only the object is present, so the rule remains inactive, and the requested motion executes.
The top rows of Fig.~\ref{fig:hardware} show that \csf{} blocks the explicit and scene-triggered person-directed punches while preserving the same skill in the benign scene.
\vspace{-1.5pt}
\subsection{Runtime Safety Shield on Hardware}\label{sec:exp-shield}
We also test runtime safety by introducing a context change after generation.
The robot begins an object-directed punch or kick sequence, then a person enters view during execution.
The updated entity list activates the person-protection rule and triggers the shield on the remaining reference.
It preserves the executed portion and introduces a safe continuation for the unchanged tracker.
The robot begins the allowed task but does not complete the strike after the person appears, as shown in Fig.~\ref{fig:hardware}.
Unlike generation-time filtering, the shield responds to a later scene change by redirecting the remaining reference in real time.

\section{Conclusion}\label{sec:conclusion}

\csf{} turns generator-produced unsafe and safe references into scene-selected affine margins enforced by a safe-reference tracking CBF-QP.
At $\gamma=0$, the QP makes the smallest correction needed to satisfy the active rules; increasing $\gamma$ redirects the estimate toward the safe alternative.
Across four pretrained generators, residual danger falls by up to $90\%$ with better preservation than a standing-pose blend; correction cost follows concept separation in each prediction space.
On a real-world Unitree G1, the same rules block explicit and deceptive unsafe commands, preserve a benign object-directed skill, and interrupt an executing strike when a person enters view.
The generator remains frozen during both offline generation and closed-loop execution.
Future work includes extending \csf{} to the root trajectory, letting the robot redirect its locomotion as well as its pose when a rule becomes active, and regenerating safe continuations from the live scene to cover longer, evolving interactions.



\balance
\begingroup
\scriptsize
\bibliographystyle{ieeetr}
\bibliography{references.bib}

@STRING{tog       = "ACM Transactions on Graphics"}

@inproceedings{song2021ddim,
  title={Denoising diffusion implicit models},
  author={Song, Jiaming and Meng, Chenlin and Ermon, Stefano},
  journal={International Conference on Learning Representations (ICLR)},
  year={2021}
}

@inproceedings{lu2022dpm,
  title={Dpm-solver: A fast ode solver for diffusion probabilistic model sampling in around 10 steps},
  author={Lu, Cheng and Zhou, Yuhao and Bao, Fan and Chen, Jianfei and Li, Chongxuan and Zhu, Jun},
  journal={Advances in neural information processing systems},
  volume={35},
  pages={5775--5787},
  year={2022}
}

@inproceedings{lipman2023flow,
  title={Flow matching for generative modeling},
  author={Lipman, Yaron and Chen, Ricky TQ and Ben-Hamu, Heli and Nickel, Maximilian and Le, Matt},
  journal={International Conference on Learning Representations (ICLR)},
  year={2023}
}

@inproceedings{guo2022generating,
  title={Generating diverse and natural 3d human motions from text},
  author={Guo, Chuan and Zou, Shihao and Zuo, Xinxin and Wang, Sen and Ji, Wei and Li, Xingyu and Cheng, Li},
  booktitle={2022 IEEE/CVF Conference on Computer Vision and Pattern Recognition (CVPR)},
  pages={5142--5151},
  year={2022},
  organization={IEEE}
}

@inproceedings{petrovich2022temos,
  title={Temos: Generating diverse human motions from textual descriptions},
  author={Petrovich, Mathis and Black, Michael J and Varol, G{\"u}l},
  booktitle={European conference on computer vision},
  pages={480--497},
  year={2022},
  organization={Springer}
}

@inproceedings{tevet2023mdm,
  title={Human motion diffusion model},
  author={Tevet, Guy and Raab, Sigal and Gordon, Brian and Shafir, Yonatan and Cohen-Or, Daniel and Bermano, Amit H},
  booktitle = {International Conference on Learning Representations (ICLR)},
  year      = {2023}
}

@article{zhang2024motiondiffuse,
  title={Motiondiffuse: Text-driven human motion generation with diffusion model},
  author={Zhang, Mingyuan and Cai, Zhongang and Pan, Liang and Hong, Fangzhou and Guo, Xinying and Yang, Lei and Liu, Ziwei},
  journal={IEEE transactions on pattern analysis and machine intelligence},
  volume={46},
  number={6},
  pages={4115--4128},
  year={2024},
  publisher={IEEE}
}

@inproceedings{chen2023mld,
  title={Executing your commands via motion diffusion in latent space},
  author={Chen, Xin and Jiang, Biao and Liu, Wen and Huang, Zilong and Fu, Bin and Chen, Tao and Yu, Gang},
  booktitle={2023 IEEE/CVF Conference on Computer Vision and Pattern Recognition (CVPR)},
  pages={18000--18010},
  year={2023},
  organization={IEEE}
}

@inproceedings{zhang2023t2mgpt,
  title={Generating human motion from textual descriptions with discrete representations},
  author={Zhang, Jianrong and Zhang, Yangsong and Cun, Xiaodong and Zhang, Yong and Zhao, Hongwei and Lu, Hongtao and Shen, Xi and Shan, Ying},
  booktitle={Proceedings of the IEEE/CVF conference on computer vision and pattern recognition},
  pages={14730--14740},
  year={2023}
}

@inproceedings{cheng2024exbody,
  title={Expressive whole-body control for humanoid robots},
  author={Cheng, Xuxin and Ji, Yandong and Chen, Junming and Yang, Ruihan and Yang, Ge and Wang, Xiaolong},
  booktitle={Robotics: Science and systems},
  year      = {2024}
}

@inproceedings{wang2022humanise,
  title={Humanise: Language-conditioned human motion generation in 3d scenes},
  author={Wang, Zan and Chen, Yixin and Liu, Tengyu and Zhu, Yixin and Liang, Wei and Huang, Siyuan},
  journal={Advances in Neural Information Processing Systems},
  volume={35},
  pages={14959--14971},
  year={2022}
}

@inproceedings{tevet2022motionclip,
  title={Motionclip: Exposing human motion generation to clip space},
  author={Tevet, Guy and Gordon, Brian and Hertz, Amir and Bermano, Amit H and Cohen-Or, Daniel},
  booktitle={European conference on computer vision},
  pages={358--374},
  year={2022},
  organization={Springer}
}

@inproceedings{petrovich2023tmr,
  title={TMR: Text-to-motion retrieval using contrastive 3D human motion synthesis},
  author={Petrovich, Mathis and Black, Michael J and Varol, G{\"u}l},
  booktitle={2023 IEEE/CVF International Conference on Computer Vision (ICCV)},
  pages={9454--9463},
  year={2023},
  organization={IEEE}
}

@inproceedings{karunratanakul2023gmd,
  title={Guided motion diffusion for controllable human motion synthesis},
  author={Karunratanakul, Korrawe and Preechakul, Konpat and Suwajanakorn, Supasorn and Tang, Siyu},
  booktitle={2023 IEEE/CVF International Conference on Computer Vision (ICCV)},
  pages={2151--2162},
  year={2023},
  organization={IEEE}
}

@inproceedings{xie2024omnicontrol,
  title={Omnicontrol: Control any joint at any time for human motion generation},
  author={Xie, Yiming and Jampani, Varun and Zhong, Lei and Sun, Deqing and Jiang, Huaizu},
  booktitle={International Conference on Learning Representations},
  volume={2024},
  pages={28176--28194},
  year={2024}
}

@inproceedings{yuan2023physdiff,
  title={Physdiff: Physics-guided human motion diffusion model},
  author={Yuan, Ye and Song, Jiaming and Iqbal, Umar and Vahdat, Arash and Kautz, Jan},
  booktitle={2023 IEEE/CVF International Conference on Computer Vision (ICCV)},
  pages={15964--15975},
  year={2023},
  organization={IEEE}
}

@inproceedings{shafir2024priormdm,
  title={Human motion diffusion as a generative prior},
  author={Shafir, Yonatan and Tevet, Guy and Kapon, Roy and Bermano, Amit},
  booktitle={International Conference on Learning Representations},
  volume={2024},
  pages={8717--8733},
  year={2024}
}

@inproceedings{schramowski2023sld,
  title={Safe latent diffusion: Mitigating inappropriate degeneration in diffusion models},
  author={Schramowski, Patrick and Brack, Manuel and Deiseroth, Bj{\"o}rn and Kersting, Kristian},
  booktitle={2023 IEEE/CVF Conference on Computer Vision and Pattern Recognition (CVPR)},
  pages={22522--22531},
  year={2023},
  organization={IEEE}
}

@inproceedings{gandikota2023erasing,
  title={Erasing concepts from diffusion models},
  author={Gandikota, Rohit and Materzynska, Joanna and Fiotto-Kaufman, Jaden and Bau, David},
  booktitle={Proceedings of the IEEE/CVF international conference on computer vision},
  pages={2426--2436},
  year={2023}
}

@inproceedings{kumari2023ablating,
  title={Ablating concepts in text-to-image diffusion models},
  author={Kumari, Nupur and Zhang, Bingliang and Wang, Sheng-Yu and Shechtman, Eli and Zhang, Richard and Zhu, Jun-Yan},
  booktitle={2023 IEEE/CVF International Conference on Computer Vision (ICCV)},
  pages={22634--22645},
  year={2023},
  organization={IEEE}
}

@inproceedings{gandikota2024uce,
  title={Unified concept editing in diffusion models},
  author={Gandikota, Rohit and Orgad, Hadas and Belinkov, Yonatan and Materzy{\'n}ska, Joanna and Bau, David},
  booktitle={2024 IEEE/CVF Winter Conference on Applications of Computer Vision (WACV)},
  pages={5099--5108},
  year={2024},
  organization={IEEE}
}

@article{wang2026safemo,
  title={Safemo: Linguistically grounded unlearning for trustworthy text-to-motion generation},
  author={Wang, Yiling and Zhang, Zeyu and Wang, Yiran and Tang, Hao},
  journal={arXiv preprint arXiv:2601.00590},
  year={2026}
}

@article{cho2026safeflow,
  title={SafeFlow: Real-Time Text-Driven Humanoid Whole-Body Control via Physics-Guided Rectified Flow and Selective Safety Gating},
  author={Cho, Hanbyel and Kim, Sang-Hun and Kang, Jeonguk and Koo, Donghan},
  journal={arXiv preprint arXiv:2603.23983},
  year={2026}
}

@article{ames2017cbf,
  title={Control barrier function based quadratic programs for safety critical systems},
  author={Ames, Aaron D and Xu, Xiangru and Grizzle, Jessy W and Tabuada, Paulo},
  journal={IEEE transactions on automatic control},
  volume={62},
  number={8},
  pages={3861--3876},
  year={2016},
  publisher={IEEE}
}

@inproceedings{ames2019control,
  title={Control barrier functions: Theory and applications},
  author={Ames, Aaron D and Coogan, Samuel and Egerstedt, Magnus and Notomista, Gennaro and Sreenath, Koushil and Tabuada, Paulo},
  booktitle={2019 18th European control conference (ECC)},
  pages={3420--3431},
  year={2019},
  organization={Ieee}
}

@article{hsu2024safetyfilter,
  title={The safety filter: A unified view of safety-critical control in autonomous systems},
  author={Hsu, Kai-Chieh and Hu, Haimin and Fisac, Jaime F},
  journal={Annual Review of Control, Robotics, and Autonomous Systems},
  volume={7},
  year={2023},
  publisher={Annual Reviews}
}

@article{yang2025cbfrl,
  title={CBF-RL: Safety filtering reinforcement learning in training with control barrier functions},
  author={Yang, Lizhi and Werner, Blake and de Sa, Massimiliano and Ames, Aaron D},
  journal={arXiv preprint arXiv:2510.14959},
  year={2025}
}

@article{yang2026pacman,
  title={PAC-MAN: Perception-Aware CBF-RL for Whole-Body Safety in Humanoid Dodgeball},
  author={Yang, Lizhi and Li, Junheng and Ames, Aaron D},
  journal={arXiv preprint arXiv:2607.28623},
  year={2026}
}

@inproceedings{alshiekh2018shielding,
  title={Safe reinforcement learning via shielding},
  author={Alshiekh, Mohammed and Bloem, Roderick and Ehlers, R{\"u}diger and K{\"o}nighofer, Bettina and Niekum, Scott and Topcu, Ufuk},
  booktitle={Proceedings of the AAAI conference on artificial intelligence},
  volume={32},
  number={1},
  year={2018}
}

@inproceedings{xiao2025safediffuser,
  title={Safediffuser: Safe planning with diffusion probabilistic models},
  author={Xiao, Wei and Wang, Johnson and Gan, Chuang and Hasani, Ramin and Lechner, Mathias and Rus, Daniela},
  booktitle={International Conference on Learning Representations},
  volume={2025},
  pages={100391--100412},
  year={2025}
}

@inproceedings{yang2026safeflowmatcher,
  title={Safeflowmatcher: Safe and fast planning using flow matching with control barrier functions},
  author={Yang, Jeongyong and Jang, Seunghwan and Han, SooJean},
  booktitle={International Conference on Learning Representations},
  volume={2026},
  pages={54058--54087},
  year={2026}
}

@inproceedings{christopher2024projected,
  title={Constrained synthesis with projected diffusion models},
  author={Christopher, Jacob K and Baek, Stephen and Fioretto, Ferdinando},
  journal={Advances in Neural Information Processing Systems},
  volume={37},
  pages={89307--89333},
  year={2024}
}

@article{li2025hardflow,
  title={HardFlow: Hard-Constrained Sampling for Flow-Matching Models Via Trajectory Optimization},
  author={Li, Zeyang and Alim, Kaveh and Azizan, Navid},
  journal={IEEE Transactions on Pattern Analysis and Machine Intelligence},
  year={2026},
  publisher={IEEE}
}

@article{hildreth1957quadratic,
  title={A quadratic programming procedure},
  author={Hildreth, Clifford},
  journal={Naval research logistics quarterly},
  volume={4},
  number={1},
  pages={79--85},
  year={1957},
  publisher={Wiley Online Library}
}

@inproceedings{ren2023knowno,
title = 	 {Robots That Ask For Help: Uncertainty Alignment for Large Language Model Planners},
  author =       {Ren, Allen Z. and Dixit, Anushri and Bodrova, Alexandra and Singh, Sumeet and Tu, Stephen and Brown, Noah and Xu, Peng and Takayama, Leila and Xia, Fei and Varley, Jake and Xu, Zhenjia and Sadigh, Dorsa and Zeng, Andy and Majumdar, Anirudha},
  booktitle = 	 {Proceedings of The 7th Conference on Robot Learning},
  pages = 	 {661--682},
  year = 	 {2023},
  editor = 	 {Tan, Jie and Toussaint, Marc and Darvish, Kourosh},
  volume = 	 {229},
  series = 	 {Proceedings of Machine Learning Research},
  month = 	 {06--09 Nov},
  publisher =    {PMLR},
}

@inproceedings{yang2024safetychip,
  title={Plug in the safety chip: Enforcing constraints for llm-driven robot agents},
  author={Yang, Ziyi and Raman, Shreyas S and Shah, Ankit and Tellex, Stefanie},
  booktitle={2024 IEEE International Conference on Robotics and Automation (ICRA)},
  pages={14435--14442},
  year={2024},
  organization={IEEE}
}

@inproceedings{robey2025jailbreaking,
  author    = {Alexander Robey and Zachary Ravichandran and Vijay Kumar and Hamed Hassani and George J. Pappas},
  title     = {Jailbreaking {LLM}-Controlled Robots},
  booktitle = {IEEE Int. Conf. Robot. Autom. (ICRA)},
  year      = {2025}
}

@inproceedings{karunratanakul2024dno,
  title={Optimizing diffusion noise can serve as universal motion priors},
  author={Karunratanakul, Korrawe and Preechakul, Konpat and Aksan, Emre and Beeler, Thabo and Suwajanakorn, Supasorn and Tang, Siyu},
  booktitle={2024 IEEE/CVF Conference on Computer Vision and Pattern Recognition (CVPR)},
  pages={1334--1345},
  year={2024},
  organization={IEEE}
}

@inproceedings{liu2024progmogen,
  title={Programmable motion generation for open-set motion control tasks},
  author={Liu, Hanchao and Zhan, Xiaohang and Huang, Shaoli and Mu, Tai-Jiang and Shan, Ying},
  booktitle={2024 IEEE/CVF Conference on Computer Vision and Pattern Recognition (CVPR)},
  pages={1399--1408},
  year={2024},
  organization={IEEE}
}

@article{cao2026texedo,
  title={TEXEDO: Test Time Scaling for Controller-aware Language-conditioned Humanoid Motion Generation},
  author={Cao, Jianuo and Chen, Yuxin and Song, Yuzhen and Tomizuka, Masayoshi and Li, Chenran and Tian, Thomas},
  journal={arXiv preprint arXiv:2606.22998},
  year={2026}
}

@inproceedings{dematteis2026unlearning,
  title={Human motion unlearning},
  author={De Matteis, Edoardo and Migliarini, Matteo and Sampieri, Alessio and Spinelli, Indro and Galasso, Fabio},
  booktitle={Proceedings of the AAAI Conference on Artificial Intelligence},
  volume={40},
  number={5},
  pages={3533--3541},
  year={2026}
}

@inproceedings{miao2025alert,
  title={Autonomous llm-enhanced adversarial attack for text-to-motion},
  author={Miao, Honglei and Ma, Fan and Quan, Ruijie and Zhan, Kun and Yang, Yi},
  booktitle={Proceedings of the AAAI conference on artificial intelligence},
  volume={39},
  number={6},
  pages={6144--6152},
  year={2025}
}

@article{zhao2026ardy,
  title={Autoregressive Diffusion with Hybrid Representation for Interactive Human Motion Generation},
  author={Zhao, Kaifeng and Petrovich, Mathis and Zhang, Haotian and Wang, Tingwu and Tang, Siyu and Rempe, Davis},
  journal={ACM Transactions on Graphics (TOG)},
  volume={45},
  number={4},
  pages={1--14},
  year={2026},
  publisher={ACM New York, NY, USA}
}

@article{rempe2026kimodo,
  title={Kimodo: Scaling controllable human motion generation},
  author={Rempe, Davis and Petrovich, Mathis and Yuan, Ye and Zhang, Haotian and Peng, Xue Bin and Jiang, Yifeng and Wang, Tingwu and Iqbal, Umar and Minor, David and de Ruyter, Michael and others},
  journal={arXiv preprint arXiv:2603.15546},
  year={2026}
}

@article{jia2026echo,
  title={ECHO: Edge-cloud humanoid orchestration for language-to-motion control},
  author={Jia, Haozhe and Song, Jianfei and Zhang, Yuan and Jin, Honglei and Fan, Youcheng and Chen, Wenshuo and Zhang, Wei and Yue, Yutao},
  journal={arXiv preprint arXiv:2603.16188},
  year={2026}
}

@inproceedings{li2026motionhiflow,
    author    = {Li, Heng and Lin, Xiaotong and Zeng, Ling-An and Kang, Yulei and Li, Shuai and Hu, Jian-Fang},
    title     = {MotionHiFlow: Text-to-Motion via Hierarchical Flow Matching},
    booktitle = {Proceedings of the IEEE/CVF Conference on Computer Vision and Pattern Recognition (CVPR)},
    month     = {June},
    year      = {2026},
    pages     = {9352-9363}
}

@book{boyd2004convex,
  author    = {Stephen Boyd and Lieven Vandenberghe},
  title     = {Convex Optimization},
  publisher = {Cambridge University Press},
  year      = {2004}
}

@article{luo2025sonic,
  title={Sonic: Supersizing motion tracking for natural humanoid whole-body control},
  author={Luo, Zhengyi and Yuan, Ye and Wang, Tingwu and Li, Chenran and Casta{\~n}eda, Fernando and Chen, Sirui and Cao, Zi-Ang and Li, Jiefeng and Minor, David and Ben, Qingwei and others},
  journal={Science Robotics},
  volume={11},
  number={117},
  pages={eaed4592},
  year={2026},
  publisher={American Association for the Advancement of Science}
}

@inproceedings{agrawal2017dcbf,
  title={Discrete control barrier functions for safety-critical control of discrete systems with application to bipedal robot navigation.},
  author={Agrawal, Ayush and Sreenath, Koushil},
  booktitle={Robotics: Science and systems},
  volume={13},
  pages={1--10},
  year={2017},
  organization={Cambridge, MA, USA}
}
\endgroup

\end{document}